%% file: icml2026.tex
\documentclass{article}

\usepackage{microtype}
\usepackage{graphicx}
\usepackage{subcaption}

\usepackage{booktabs} % for professional tables
\usepackage{bm}

\usepackage[preprint]{icml2026} 
\usepackage{hyperref}

\usepackage{icml2026}

\usepackage{amsmath}
\usepackage{amssymb}
\usepackage{mathtools}
\usepackage{amsthm}
\usepackage{siunitx} 
\usepackage{wrapfig}

\input{math_commands.tex}

\usepackage[capitalize,noabbrev]{cleveref}

\theoremstyle{plain}
\newtheorem{theorem}{Theorem}[section]

\theoremstyle{definition}

\theoremstyle{remark}

\newcommand{\ours}{IDDM}

\usepackage[textsize=tiny]{todonotes}

\icmltitlerunning{Interpolating Discrete Diffusion Models with Controllable Resampling}

\begin{document}

\twocolumn[
\icmltitle{	
Interpolating Discrete Diffusion Models with Controllable Resampling}

% It is OKAY to include author information, even for blind
% submissions: the style file will automatically remove it for you
% unless you've provided the [accepted] option to the icml2025
% package.

% List of affiliations: The first argument should be a (short)
% identifier you will use later to specify author affiliations
% Academic affiliations should list Department, University, City, Region, Country
% Industry affiliations should list Company, City, Region, Country

% You can specify symbols, otherwise they are numbered in order.
% Ideally, you should not use this facility. Affiliations will be numbered
% in order of appearance and this is the preferred way.
\icmlsetsymbol{equal}{*}

\begin{icmlauthorlist}
\icmlauthor{Marcel Kollovieh}{equal,yyy,comp,sch}
\icmlauthor{Sirine Ayadi}{equal,yyy,comp}
\icmlauthor{Stephan Günnemann}{yyy,comp,sch}
%\icmlauthor{}{sch}
%\icmlauthor{}{sch}
\end{icmlauthorlist}

\icmlaffiliation{yyy}{School of Computation, Information and Technology, Technical University of Munich}
\icmlaffiliation{comp}{Munich Data Science Institute}
\icmlaffiliation{sch}{Munich Center for Machine Learning}

\icmlcorrespondingauthor{Marcel Kollovieh}{m.kollovieh@tum.de}

% You may provide any keywords that you
% find helpful for describing your paper; these are used to populate
% the "keywords" metadata in the PDF but will not be shown in the document
\icmlkeywords{Machine Learning, ICML}

\vskip 0.3in
]

% this must go after the closing bracket ] following \twocolumn[ ...

% This command actually creates the footnote in the first column
% listing the affiliations and the copyright notice.
% The command takes one argument, which is text to display at the start of the footnote.
% The \icmlEqualContribution command is standard text for equal contribution.
% Remove it (just {}) if you do not need this facility.

%\printAffiliationsAndNotice{}  % leave blank if no need to mention equal contribution
\printAffiliationsAndNotice{\icmlEqualContribution} % otherwise use the standard text.
\crefname{equation}{equation}{equations}

\input{sections/0_abstract}
\input{sections/1_introduction}
\input{sections/2_background}
\input{sections/3_methodology}

\input{sections/4_experiments}
\input{sections/5_related_work}

\input{sections/6_conclusion}
%\clearpage
%\input{includes/mixed}
\bibliography{example_paper}
\bibliographystyle{icml2026}

%%%%%%%%%%%%%%%%%%%%%%%%%%%%%%%%%%%%%%%%%%%%%%%%%%%%%%%%%%%%%%%%%%%%%%%%%%%%%%%
%%%%%%%%%%%%%%%%%%%%%%%%%%%%%%%%%%%%%%%%%%%%%%%%%%%%%%%%%%%%%%%%%%%%%%%%%%%%%%%
% APPENDIX
%%%%%%%%%%%%%%%%%%%%%%%%%%%%%%%%%%%%%%%%%%%%%%%%%%%%%%%%%%%%%%%%%%%%%%%%%%%%%%%
%%%%%%%%%%%%%%%%%%%%%%%%%%%%%%%%%%%%%%%%%%%%%%%%%%%%%%%%%%%%%%%%%%%%%%%%%%%%%%%
\newpage
\appendix
\onecolumn
\input{sections/7_appendix}
%%%%%%%%%%%%%%%%%%%%%%%%%%%%%%%%%%%%%%%%%%%%%%%%%%%%%%%%%%%%%%%%%%%%%%%%%%%%%%%
%%%%%%%%%%%%%%%%%%%%%%%%%%%%%%%%%%%%%%%%%%%%%%%%%%%%%%%%%%%%%%%%%%%%%%%%%%%%%%%

\end{document}

%% file: math_commands.tex
\def\eqref#1{equation~\ref{#1}}
\def\1{\bm{1}}

\DeclareMathAlphabet{\mathsfit}{\encodingdefault}{\sfdefault}{m}{sl}
\SetMathAlphabet{\mathsfit}{bold}{\encodingdefault}{\sfdefault}{bx}{n}

\newcommand{\x}{\mathbf{x}}
\newcommand{\q}{\mathbf{q}}
\newcommand{\y}{\mathbf{y}}
\newcommand{\m}{\mathbf{m}}

\newcommand{\ex}{\mathbf{e}_\x}

\DeclareMathOperator{\Cat}{Cat}

\newcommand{\e}{\mathbf{e}}

%% file: sections/0_abstract.tex
\begin{abstract}
Discrete diffusion models form a powerful class of generative models across diverse domains, including text and graphs. However, existing approaches face fundamental limitations. Masked diffusion models suffer from irreversible errors due to early unmasking, while uniform diffusion models, despite enabling self-correction, often yield low-quality samples due to their strong reliance on intermediate latent states.
We introduce \textit{\ours}, an Interpolating Discrete Diffusion Model, that improves diffusion by reducing dependence on intermediate latent states. Central to \textit{\ours} is a controllable resampling mechanism that partially resets probability mass to the marginal distribution, mitigating error accumulation and enabling more effective token corrections. \textit{\ours} specifies a generative process whose transitions interpolate between staying at the current state, resampling from a prior, and flipping toward the target state, while enforcing marginal consistency and fully decoupling training from inference.
We benchmark our model against state-of-the-art discrete diffusion models across molecular graph generation as well as text generation tasks, demonstrating competitive performance.
\end{abstract}

%% file: sections/1_introduction.tex
\section{Introduction}
Discrete diffusion models~\citep{austin2021structured} have emerged as a powerful approach for generating structures across domains such as proteins~\citep{gruver2023protein,nisonoff2024unlocking}, DNA sequences~\citep{sarkar2024designing}, and molecules~\citep{vignac2022digress,campbell2022continuous}, and are starting to catch up with autoregressive models in language generation~\citep{lou2023discrete,sahoo2025diffusion,schiff2024simple}. As with their continuous counterparts, the core idea is to define Markov transitions that gradually corrupt the data, and then learn a model to denoise it by parametrizing the reverse process. 
Common forward processes include uniform transitions over the state space or masking, also known as absorbing-state diffusion, in which the state either remains unchanged or transitions to a mask token~\citep{austin2021structured,sahoo2024simple,shi2024simplified}.
%absorbing states, e.g., masking~\citep{sahoo2024simple,shi2024simplified}.

While these frameworks work analogously to the continuous case, they introduce considerable limitations. 
Masked diffusion models, for example, do not support refinement: once a token is unmasked, it remains fixed throughout the entire generation, restricting the model to correct earlier mistakes and leading to error accumulation~\citep{sahoo2024simple,wang2025remasking,shi2024simplified,boget2025simple}. Uniform transitions, on the other hand, allow flips back to the prior, but they can still suffer from error accumulation~\citep{boget2025simple}, and are typically outperformed by absorbing-state diffusion approaches~\citep{austin2021structured,lou2023discrete}.

In this work, we propose \emph{Interpolating Discrete Diffusion Model} (\emph{\ours{}}), a generative model for discrete data that derives transition probabilities from a different perspective. Instead of defining a Markovian forward transition to reach a terminal distribution, we directly define the reverse, i.e., the generative distribution. To this end, our transitions make use of three different actions: \emph{(i)} staying in the current state, \emph{(ii)} transitioning to the goal state, and \emph{(iii)} resampling from the prior distribution, see \cref{fig:graphical_model}.

\input{includes/graphical_model}
By constraining the marginal distribution to match an interpolation from prior to the data distribution, we obtain a family of posterior transitions. The posteriors interpolate between a fully local update without random flips and a state-independent update that discards the current state, enabling us to trade off preservation of the current knowledge and stochasticity.
To demonstrate \ours{}'s capability, we evaluate it on molecule generation and language tasks.
\input{includes/main_figure}
\newpage
\paragraph{Contributions.} Our contributions can be summarized as:
\begin{itemize}
    \item  \textbf{Interpolating reverse dynamics.} We propose the \emph{Interpolating Discrete Diffusion Model (\ours{})}, a simple discrete diffusion model consisting of three different actions, i.e., stay in the current state, transition to the goal state, resample from the prior.
    \item \textbf{Marginal-consistent posterior family.} By enforcing that the marginals adhere to an interpolation between the prior and the data distribution, we derive a family of posterior transitions that trades off preserving the current state against injecting stochasticity.
    \item \textbf{Forward-process interpretation.} We show that the resulting transitions correspond, in general, to a non-Markovian forward process.
    \item \textbf{Empirical evaluation.} We demonstrate that our model achieves competitive performance on well-established benchmarks for molecule generation and language modeling.
\end{itemize}

%% file: includes/graphical_model.tex
\begin{figure}[t]
    \centering
    %\vspace{0.5em}
    \includegraphics[width=\columnwidth]{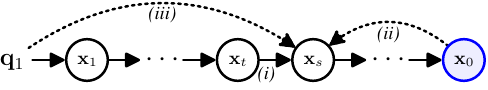}
    \caption{Graphical model of \ours{}. In each step, our model consists of three actions: \emph{(i)} stay in the current state, \emph{(ii)} flip to the goal state, and \emph{(iii)} sample from the prior distribution. The model can be trained without the need to define a forward process.}
    \label{fig:graphical_model}
\end{figure}

%% file: includes/main_figure.tex
\begin{figure*}[t]
    \centering
    \includegraphics[width=\linewidth]{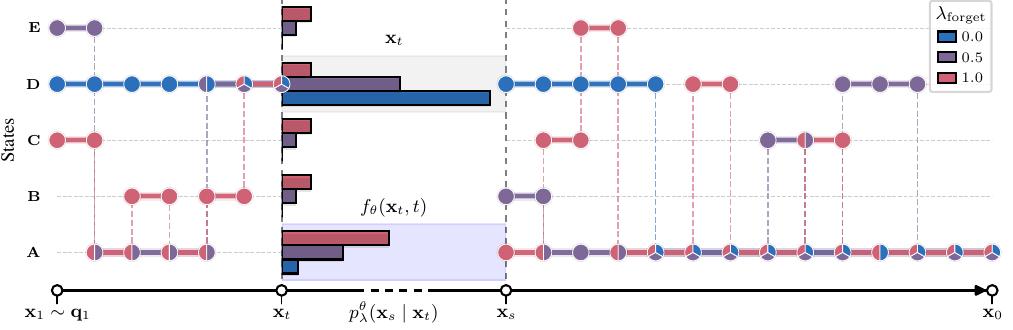}
    \caption{Overview of the generative process of \ours{}. Initial samples are drawn from the prior distribution $\q_1$. In each denoising step $\x_s\mid\x_t$, our model constructs the posterior $p_\lambda^\theta(\x_s\mid\x_t)$. Increasing $\lambda$ increases the stochasticity and state transitions by shifting probability mass from the current to the predicted and random states, as visualized in the bar plot.}
    \label{fig:main_figure}
\end{figure*}

%% file: sections/2_background.tex
\section{Discrete Diffusion Models}
\paragraph{Notations.} For clarity and ease of presentation, we use $\x$ to denote a single discrete random variable taking values in $\{1,\dots,K\}$. 
In sequence modeling, $\x$ represents a single token, while in graph modeling, it can represent a single node or a single edge in the graph. We denote the corresponding one-hot encoding by $\ex \in \Delta^{K-1}$, where $\Delta^{K-1}$ is the simplex over $K$ categories. We denote by $\Cat(\cdot;\y)$ the categorical distribution with probability vector $\y \in \Delta^{K-1}$. 

Diffusion models learn a Markov chain that involves a forward process to perturb the data $\x \sim \q_0(\x)$ to increasingly noisy latent variables $\x_t$, for $t \in [0,1]$, and a trained reverse process. 
The forward process $p(\x_t \mid \x_s)$ is expressed under the Markovian assumption, and the induced marginal for every latent variable $\x_t$ can be defined as an interpolation between the clean data and a noisy prior $\q_1$ \citep{sahoo2024simple}:
\begin{equation}
    p(\x_t \mid \x) \;=\; \Cat(\x_t; (1\;-\;\alpha_t) \q_1 \;+\; \alpha_t \e_\x)\;, 
\end{equation}
where $\alpha_t \in [0,1]$ is a strictly decreasing function in $t$, and unless stated otherwise, $\x_t$ is represented in one-hot form.
The parametrized reverse process $p_\theta(\x_s \mid \x_t, \x), s < t$ is trained to optimize a variational lower bound on the log-likelihood:
\begin{align}
    \log\ & p_{\theta}(\x)
    \geq \notag \\
    &
    \mathbb{E}_{p(\x_{t(1)}, \dots,\x_{t(T)}\mid \x)}\left[
    \log \frac{p_{\theta}(\x,\x_{t(1)}, \dots,\x_{t(T)})}{p(\x_{t(1)}, \dots,\x_{t(T)}\mid \x)}
    \right],
\end{align}   
where $t(i) = i/T$, $T$ is the number of diffusion steps, and the reverse Markov process 
\begin{align}
    &p_{\theta}(\x, \x_{t(1:T)})= \notag\\
    &\qquad p(\x_{t(T)})\,\prod_{i=2}^{T} p_{\theta}(\x_{t(i-1)} \mid\x_{t(i)})\,p_{\theta}(\x \mid \x_{t(1)}), 
\end{align}
gradually denoises the latent states
towards the data distribution. There exist two popular variants for interpolating discrete diffusion models: uniform diffusion models with a stationary distribution $\q_1 = \1 / K$, where $\1$ is a $K-$dimensional column vector of all ones, and masked diffusion models, also known as absorbing state diffusion, where the space is augmented with a special mask token, and the prior distribution is set to $\q_1 = \m$. Marginal distributions over node and edge attributes as noise have proven to be more suitable for graph generation.

Standard discrete diffusion models tightly couple the reverse process to a fixed forward corruption mechanism. The forward process is defined as a Markov chain $p(\x_t \mid \x_s)$, and the reverse transitions can be determined via Bayes' rule:
\[
p(\x_s \mid \x_t, \x) = \frac{p(\x_t \mid \x_s)p(\x_s \mid \x)}{p(\x_t \mid \x)}\;,
\]
where $p(\x_t \mid \x_s) = p(\x_t \mid \x_s, \x)$ follows from the Markov property. As a result, the reverse dynamics are largely determined once the forward kernel and the noise schedule are fixed. This can limit the extent to which reverse updates can be modified independently of the noising process and, consequently, can negatively impact the sample quality. For example, in absorbing-state diffusion, once a token is unmasked, it remains fixed for the remainder of the generative process. This can lead to systematic error accumulation as early mistakes cannot be recovered. Uniform diffusion models, on the other hand, inherently enable self-correction since the latent states $\x_t$ can be continuously updated throughout the reverse process. However, this mechanism can also introduce errors, as correctly predicted tokens can be randomly resampled from the uniform prior. 

%% file: sections/3_methodology.tex
\section{Methodology}
\input{includes/transition_heatmaps}
\input{algorithms/algorithms}
In this section, we propose \emph{Interpolating Discrete Diffusion Model (\ours)}, a simple generative model for discrete data that enables controlled resampling during inference and improves token correction. We provide an overview of \ours\ in \cref{fig:main_figure}.
Our goal is to learn the target distribution $\q_0$. Similar to diffusion models, we define a sequence of latent variables $\x_t$ that interpolate between the data at $t=0$ and a prior distribution at $t=1$. However, unlike diffusion models, we do not explicitly define forward, i.e., noising process $p(\x_{t}\mid \x_{s},\x)$, for $s < t$. Instead, we define the reverse-time generative process directly.

We start by defining our posterior transition probability with three actions: (i) \emph{stay} at the current state,
(ii) \emph{flip} to the goal state $\e_\x$, or (iii) \emph{resample} from the prior $\q_1$. More specifically:
\begin{align}
p(\x_{s}\mid \x_{t},\x) &= \Cat\Big(w_{\mathrm{stay}}\x_t + w_{\mathrm{prior}}\q_1 + w_{\mathrm{flip}}\e_\x \Big), \label{eq:general_posterior}
\end{align}
with $w_{\mathrm{stay}},w_{\mathrm{flip}}, w_{\mathrm{prior}}\geq 0$ and $w_{\mathrm{stay}} + w_{\mathrm{flip}} + w_{\mathrm{prior}}=1$. 
%\begin{align}\label{eq:reverse_lamb_0}
%p(\x_{s}\mid \x_{t},\x)
%&=\Cat\!\Big(\x_s;\gamma_{s\mid t}\,\x_t+(1-\gamma_{s\mid t})\e_\x\Big),
%\end{align}
%with $s<t$, $\gamma_{s\mid t}=\frac{\gamma_s}{\gamma_t}$, and $\x$ being the ground-truth sample. Intuitively, this means that we either stay in our previous state or flip to the ground-truth class. Once $\x_t$ reaches $\e_\x$, all next states get absorbed to $\e_\x$.
\paragraph{Marginal distribution.}
We constrain the marginal distribution to interpolate between the prior distribution and the goal state:
\begin{align}\label{eq:marginal_main}
p(\x_t \mid \x) &= \Cat\!\Big(
\x_t;\, (1-\gamma_t) \q_1 + \gamma_t \e_\x \Big),
\end{align}
with $\gamma_t \in [0,1]$, $\gamma_1=0$ and $\gamma_0=1$.
This choice enables direct sampling of $\x_t$ at arbitrary $t$ without simulating intermediate states. 

\paragraph{Parametrization of transition weights.}
To enforce $p(\x_s\mid \x)=\Cat((1-\gamma_s)\q_1+\gamma_s\e_\x)$ for all $s<t$, our weights must satisfy the following constraints:
\begin{align}
&(1)\, w_{\mathrm{stay}}(1-\gamma_t)+w_{\mathrm{prior}} = 1-\gamma_s, \notag \quad \text{and}
\\
&(2)\, w_{\mathrm{stay}}\gamma_t+w_{\mathrm{flip}} = \gamma_s.
\label{eq:weight_constraints}
\end{align}
We use this to introduce a forgetting parameter
$\lambda\in[0,1]$ that interpolates between a purely local update ($\lambda=0$) and a global reset to the marginal at time $s$ ($\lambda=1$).
%\begin{equation}\label{main:posterior}
%\begin{aligned}
%p_{\lambda}(\x_{s}\mid \x_{t},\x)
%&=\Cat\!\Big(
%\x_s;
%(1-\lambda)\big[\gamma_{s\mid t}\,\x_t+(1-%\gamma_{s\mid t})\e_\x\big] \\
%&\qquad
%+\lambda\big[\gamma_s q_1+(1-%\gamma_s)\e_\x\big]
%\Big),
%\end{aligned}
%\end{equation}

%This construction makes every state an absorbing one in both forward and reverse directions, similar to masking diffusion. However, in our model, this holds for arbitrary prior distributions. 

%\subsection{Resampling from the prior distribution}
%As absorbing states hinder the correction of the previous errors and therefore might accumulate those, we propose a simple extension. Rather than interpolating between the previous state and the goal state, we expand the generative process by including the prior distribution:

\begin{theorem}\label{thm:marginal_consistent}
Let \(\gamma:[0,1]\to[0,1]\) satisfy \(\gamma_0=1\) and \(\gamma_1=0\), and assume \(\gamma\) is monotone decreasing.
For any \(0<t\leq1\) and \(0\leq s <t\), define
\begin{align}
\gamma_{s\mid t} := \frac{1-\gamma_s}{1-\gamma_t}\in[0,1].
\label{eq:gamma_s_t_def}
\end{align}
Then, for every \(\lambda\in[0,1]\), the weights
\begin{align}
w_{\mathrm{stay}} &:= (1-\lambda)\,\gamma_{s\mid t}, \label{eq:w_stay}\\
w_{\mathrm{prior}}&:= \lambda\,(1-\gamma_s), \label{eq:w_prior}\\
w_{\mathrm{flip}} &:= (1-\lambda)\,(1-\gamma_{s\mid t})+\lambda\,\gamma_s \label{eq:w_flip}
\end{align}
satisfy the marginal constraints in \eqref{eq:weight_constraints}.
Therefore, the reverse transition can be written equivalently as
\begin{equation}
\begin{aligned}
p_\lambda(\x_s\mid \x_t,\x)
&=\Cat\!\Big(\x_s; (1-\lambda)\gamma_{s\mid t}\,\x_t
+\lambda(1-\gamma_s) \q_1 \\
&+\Big(1-\big((1-\lambda)\gamma_{s\mid t}+\lambda(1-\gamma_s)\big)\Big)\e_x\Big)
\label{eq:posterior_lambda_consistent}
\end{aligned}
\end{equation}

and its marginal satisfies
\begin{align}
\sum_{\x_t} p_\lambda(\x_s\mid \x_t,\x)\,p(\x_t\mid \x)
=
p(\x_s\mid \x),
\quad\text{for all }s<t,
\end{align}
where \(p(\x_t\mid \x)=\Cat(\x_t; (1-\gamma_t)\q_1+\gamma_t\e_\x)\).
\end{theorem}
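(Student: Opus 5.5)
The argument is a direct verification, organised in three steps.

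\textbf{Step 1: the weights are well defined.} First I check that $\gamma_{s\mid t}\in[0,1]$. Since $\gamma$ is monotone decreasing and $s<t$, we have $\gamma_s\ge\gamma_t$, so $0\le 1-\gamma_s\le 1-\gamma_t$. This requires $1-\gamma_t>0$. The case $\gamma_t=1$ with $t>0$ is degenerate: it forces $\gamma_s=1$, and I would either exclude it or set $\gamma_{s\mid t}:=1$ by convention (I will flag this rather than dwell on it). With the bound in hand, $w_{\mathrm{stay}}$ and $w_{\mathrm{prior}}$ are clearly nonnegative. For $w_{\mathrm{flip}}$, each of its two summands is a product of nonnegative factors. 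Summing the three definitions gives
\[
(1-\lambda)\big[\gamma_{s\mid t}+1-\gamma_{s\mid t}\big]+\lambda\big[(1-\gamma_s)+\gamma_s\big]=1 .
\]
So the weights lie on the simplex, and $w_{\mathrm{flip}}=1-(w_{\mathrm{stay}}+w_{\mathrm{prior}})$. This already gives the rewriting in \eqref{eq:posterior_lambda_consistent}.

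\textbf{Step 2: the constraints in \eqref{eq:weight_constraints}.} For constraint (1), the key identity is $\gamma_{s\mid t}(1-\gamma_t)=1-\gamma_s$, so
\[
w_{\mathrm{stay}}(1-\gamma_t)+w_{\mathrm{prior}}=(1-\lambda)(1-\gamma_s)+\lambda(1-\gamma_s)=1-\gamma_s .
\]
Constraint (2) then follows by subtraction. Adding (1) and (2) gives $w_{\mathrm{stay}}+w_{\mathrm{prior}}+w_{\mathrm{flip}}=1$, which holds by Step 1. Hence (2) is equivalent to (1) given normalisation, so no separate computation is needed.

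\textbf{Step 3: marginal consistency.} I compute $\sum_{\x_t}p_\lambda(\x_s\mid\x_t,\x)\,p(\x_t\mid\x)$ at the level of probability vectors. The posterior's probability vector is affine in $\x_t$ (one-hot), and the weights do not depend on $\x_t$. Averaging over $\x_t$ therefore replaces $\x_t$ by its mean $(1-\gamma_t)\q_1+\gamma_t\e_\x$. Collecting coefficients, the coefficient of $\q_1$ is $w_{\mathrm{stay}}(1-\gamma_t)+w_{\mathrm{prior}}$ and the coefficient of $\e_\x$ is $w_{\mathrm{stay}}\gamma_t+w_{\mathrm{flip}}$. By Step 2 these equal $1-\gamma_s$ and $\gamma_s$, giving $\Cat(\x_s;(1-\gamma_s)\q_1+\gamma_s\e_\x)=p(\x_s\mid\x)$.

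\textbf{Main obstacle.} The computations are routine. The only subtle points are the well-definedness of $\gamma_{s\mid t}$ when $\gamma_t=1$, and justifying the linearity step, namely that $\mathbb{E}[\Cat(\,\cdot\,;\text{affine}(\x_t))]=\Cat(\,\cdot\,;\text{affine}(\mathbb{E}\x_t))$. The latter holds because the categorical probability of each outcome is linear in its parameter vector.
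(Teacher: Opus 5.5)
Your proposal is correct and follows essentially the paper's argument. Both average the affine-in-$\x_t$ parameter vector to replace $\x_t$ by $\m_t=(1-\gamma_t)\q_1+\gamma_t\e_\x$, then use $\gamma_{s\mid t}(1-\gamma_t)=1-\gamma_s$ to collapse the coefficients to $1-\gamma_s$ and $\gamma_s$. The paper frames this as an induction and substitutes the weights directly instead of first checking constraints (1) and (2), and it skips your well-definedness, nonnegativity and normalization checks (including the degenerate case $\gamma_t=1$), which are worthwhile additions.
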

We provide a proof for \cref{thm:marginal_consistent}
in \cref{sec:proof}.

\paragraph{Special cases.} The parameter $\lambda$ acts as a forget mechanism controlling how much probability mass is taken from the current state $\x_t$, and how much is resampled independently from the marginal. \cref{fig:transition_heatmaps} visualizes the transition distributions $p(\x_s \mid \x_t, \x)$ for increasing values of $\lambda$.
At $\lambda=0$, our model reduces to two choices, i.e., either remain in the current state, or flip to the data token. More specifically,
\begin{equation}
\begin{aligned}
p_0(\x_s\mid \x_t,\x) =\Cat\!\Big(\x_s; \gamma_{s\mid t}\,\x_t+\Big(1-\gamma_{s\mid t}\Big)\e_x\Big).
\end{aligned}
\end{equation}
This dynamics defines an absorbing diffusion process directed toward the data token $\e_\x$. In particular, when $\x_t=\e_\x$, the transition distribution collapses to $p_0(\x_{s}\mid \x_{t} = \e_\x,\x) =\Cat\!\big(\x_s; \e_\x\big)$ indicating that $\e_\x$ is absorbing and every other state $\x_t \neq \e_\x$ remains non-absorbing, with a nonzero probability of transitioning to $\e_\x$. 
This behavior is reflected in \cref{fig:transition_heatmaps}, where probability mass is concentrated along the diagonal and the column corresponding to the data point $\x$. For $\lambda=1$, the posterior reduces to:
\begin{equation}
\begin{aligned}
p_1(\x_s\mid \x_t,\x)
&=\Cat\!\Big(\x_s; (1-\gamma_s) \q_1 +\gamma_s\e_x\Big)\\
&=p(\x_s \mid \x).
\end{aligned}
\end{equation}
In this case, the latent variables $\x_t$ and $\x_s$ are conditionally independent given $\x$, i.e., the state $\x_t$ is entirely discarded, and the transitions reset to the marginal distribution. This is illustrated in \cref{fig:transition_heatmaps} where all rows collapse to the same distribution, reflecting complete forgetting of the current state $\x_t$.

\paragraph{Remark on the forward process.}
Our approach does not require instantiating a Markovian noising process $p(\x_t \mid \x_s)$, for $s < t$, following standard diffusion methods. Instead, we directly sample the noisy latents from the marginal distribution in \eqref{eq:marginal_main}.
For completeness, however, we derive the underlying forward transition step of \ours\ via Bayes' rule, $p_{\lambda}(\x_t\mid \x_s,\x)\propto p_{\lambda}(\x_s\mid \x_t,\x)p(\x_t\mid \x).$ This induces a non-Markovian forward kernel of the form:
\begin{equation}
    p_{\lambda}(\x_t \mid \x_s, \x) = \Cat\Big(\x_t; \alpha_{\x_s} \m_t + (1 - \alpha_{\x_s})\x_s\Big)\;, 
\end{equation}
where $\m_t = p(\x_t \mid \x)$ and $\alpha_{\x_s} \in [0,1]$ is a state-dependent resampling probability. This transition admits a simple interpretation: with probability $\alpha_{\x_s}$, the latent $\x_t$ is resampled from the marginal distribution, otherwise the state retains the previous state $\x_s$ with probability $1 - \alpha_{\x_s}$.
%The forward step can be interpreted as a mixture of retaining the previous state $\x_s$ and resampling from the marginal $p(\x_t \mid \x)$.
We provide a full derivation in \cref{app:forward_transition}.  Crucially, our marginal is independent of the resampling scheduler $\lambda$, allowing decoupled training and sampling.
%which generally depends on $\x$ and is therefore non-Markovian in $\x_t$ alone. 

\subsection{Generative Distribution and optimization}\label{sec:generative_model}
As the true sample $\x$ is not known during inference, we train a neural network as a surrogate. We define the generative distribution as follows:
\begin{align}
p_{\lambda}^\theta(\x_{s}\mid \x_{t})
&=\Cat\!\Big(\x_s; (1-\lambda)\big[\gamma_{s\mid t}\,\x_t+(1-\gamma_{s\mid t})\x_\theta\big] \notag \\
&\qquad
+\lambda\big[(1-\gamma_s) \q_1+\gamma_s\x_\theta\big]\Big), \label{eq:parametrized_reverse}
\end{align} 
where $\x_\theta \coloneqq f_\theta(\x_t, t)$ is a neural network prediction. As with standard diffusion models, we aim to optimize the log-likelihood $\log p_\theta(\x)$. However, due to the intractability of the marginalization, we derive the evidence lower bound (ELBO):
\begin{align}
    &\log p_{\lambda}^\theta(\x)
    \geq \notag \\
    &\quad \mathbb{E}_{p_\lambda(\x_{t(1)}, \dots,\x_{t(T)}\mid \x)}\left[
    \log \frac{p_\lambda^\theta(\x,\x_{t(1)}, \dots,\x_{t(T)})}{p_\lambda(\x_{t(1)}, \dots,\x_{t(T)}\mid \x)}
    \right] \notag \\
    &\quad=
    \mathbb{E}_{p_\lambda(\x_{t(1)}\mid \x)}\!\left[\log p_\lambda^\theta(\x\mid \x_{t(1)})\right] \notag \\
    \;&\qquad -\; \sum_{i=2}^{T}
\mathbb{E}_{p_\lambda(\x_{t(i)}\mid \x)}
\!\left[
\mathcal{L}_{\text{diff}}^{(i)}
\right]\; \notag\\
\;&\qquad -\;
    \mathbb{E}_{\q_0(\x)}\bigg[\mathrm{KL}\!\Big(
p(\x_{t(T)}\mid \x)\,\big\|\,p(\x_{t(T)}) \Big)\bigg]\;,
\end{align}
where
\[\mathcal{L}_{\text{diff}}^{(i)}=
 \mathrm{KL}\!\left( p_\lambda(\x_{t(i-1)}\mid \x_{t(i)},\x)\,\big\lVert\, p_\lambda^\theta(\x_{t(i-1)}\mid\x_{t(i)}) \right).
\]

The ELBO is maximized when the KL divergences between the two distributions are minimized, i.e., when the neural network $f_\theta$ predicts $\e_\x$. As the ELBOs share the same optimum across all $\lambda$ when the model perfectly predicts $\x$, we only need to train one model, allowing us to change $\lambda$ only during inference.
We provide the training and sampling algorithms of \ours\ in \cref{alg:cat-diffusion-training} and \cref{alg:cat-diffusion-sampling}, respectively.

\paragraph{Expected total number of state transitions.}
In the following, we analyze the expected number of state transitions induced by \ours, i.e., $\x_{t(k-1)} \neq \x_{t(k)}$, for $k \in \{1, \cdots, T\}$, along a generation trajectory of $T$ steps. Let
\[
C_k = \mathbf{1}\Big[\x_{t(k-1)} \neq \x_{t(k)}\Big]
\]
denote the indicator of a state change at step $k$. The expected total number of transitions is then given by:
\[
\mathbb{E}_{\lambda}\big[N_T\big] = 
\mathbb{E}_{\lambda}\Bigg[ \sum_{k=1}^{T} C_k\Bigg] = 
%\sum_{k=1}^{T} \mathbb{E}_{\lambda} \Big[C_k\Big] =
\sum_{k=1}^{T} \mathbb{E}_{\lambda} \Big[ p_{\lambda}(C_k = 1 \mid \x_{t(k)})\Big]. 
\]
In \cref{sec:expectation_of_transitions}, we show that this expectation admits the form $\mathbb{E}_{\lambda}\big[N_T\big] = A\lambda + B$, where $A,B\geq0$. This implies that the expected number of state transitions grows linearly with the resampling coefficient $\lambda$.
In practice, however, the model $f_{\theta}(\x_t, t)$ does not perfectly preserve the marginal, therefore the distribution of $\x_{t(k)}$ can drift with $\lambda$. We investigate this effect in \cref{sec:lambda_effect}.

\paragraph{Extension to graphs and sequences.}
So far, we have considered a single-token data point $\x$. \ours\ can be readily applied to arbitrary discrete domains and accommodates arbitrary prior distributions.
In this paper, we apply \ours\ to both graph and sequence generation tasks. We represent a graph $G = (\mathbf{X}, \mathbf{E})$ with $N$ nodes, where $\mathbf{X} \in \mathbb{R}^{N \times K_1}$ is a matrix of one-hot encoded representations of each node $\x_i \in \{0,1\}^{K_1}$, and $\mathbf{E} \in \mathbb{R}^{N \times N \times K_2}$ groups the one-hot encoding of each edge $\e_{ij} \in \{0,1\}^{K_2}$. Here, $K_1$ and $K_2$ denote the number of possible node and edge attributes, respectively. We adopt marginal prior distributions over nodes and edges, as they have proved effective in prior graph diffusion approaches \citep{vignac2022digress,qin2024defog}. For sequence generation, a sequence is represented as $\mathbf{S} \in \mathbb{R}^{L\times K}$, where $L$ is the sequence length and $K$ is the vocabulary size. In this work, we mainly focus on uniform priors, as we find that they consistently yield lower performance than absorbing-state diffusion models in language modeling \citep{austin2021structured,lou2023discrete}.

%% file: includes/transition_heatmaps.tex
\begin{figure}
    \centering
    \includegraphics[width=\columnwidth]{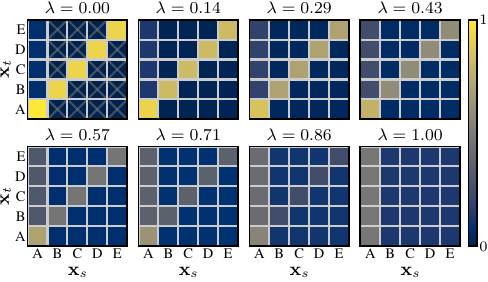}
    \caption{Visualization of $p(\x_s \mid \x_t, \x = \text{A})$ for different values of $\lambda$. Low $\lambda$ preserves the current state with high probability. As $\lambda$ increases, the mass progressively shifts toward resampling from the prior (uniform in this example).}
    \label{fig:transition_heatmaps}
\end{figure}

%% file: algorithms/algorithms.tex
\begin{figure*}[t]
\centering
\begin{minipage}[t]{0.49\textwidth}
\vspace{0pt}
\begin{algorithm}[H]
\caption{Training \ours}
\label{alg:cat-diffusion-training}
\begin{algorithmic}
\STATE \textbf{Inputs:} Data distribution $\q_0(\x)$; denoising model $f_\theta$; noise schedule $\gamma(\cdot)$; steps $T$; learning rate $\eta$

\STATE \textbf{Output:} trained parameters $\theta$

\STATE \textbf{While} not converged \textbf{do}

\STATE $\mathrm{d}t = 1/T$ 
\STATE $\x \sim \q_0(\x)$

\STATE $t \sim \mathcal{U}(\mathrm{d}t,1)$,\quad $s = t - \mathrm{d}t$

\STATE $\x_t \sim p(\x_t \mid \x)$

%\STATE $\gamma_{s\mid t} = (1 - \gamma_s) / (1 - \gamma_t)$

\STATE $p(\x_s \mid \x_t, \x) = \gamma_{s\mid t}\,\x_t + (1 - \gamma_{s\mid t})\,\x$

\STATE $p(\x_s \mid \x_t) = \gamma_{s\mid t}\,\x_t + (1 - \gamma_{s\mid t})\, f_\theta(\x_t, t)$

\STATE $\theta = \theta - \eta \nabla_\theta$ $\mathcal{L}_{\text{diff}}$

\STATE {\bfseries end while}

%\STATE \textbf{return} $\theta$
\end{algorithmic}
\end{algorithm}
\end{minipage}
%\hspace{1cm}
\hfill
\begin{minipage}[t]{0.49\textwidth}
\vspace{0pt}
\begin{algorithm}[H]
\caption{Sampling with \ours}
\label{alg:cat-diffusion-sampling}
\begin{algorithmic}
\STATE \textbf{Inputs:} Denoiser $f_\theta$; nosing scheduler $\gamma(\cdot)$; forgetting scheduler $\lambda(\cdot)$; steps $T$; prior $\q_1$

\STATE \textbf{Output:} $\x_0$

\STATE $\x_T \sim \q_1(\cdot)$

\STATE \textbf{for} $i=T, T-1, \ldots, 1$ \textbf{do}

\STATE \quad $t = i/T$,\quad $s = (i-1)/T$

\STATE \quad $\gamma_{s\mid t} = (1-\gamma_s)/(1-\gamma_t), \quad \lambda_t = \lambda(t)$

\STATE \quad $w_\text{stay} = (1 - \lambda) \gamma_{s\mid t}$, \quad $w_\text{prior} = \lambda (1 - \gamma_s)$

\STATE \quad $w_\text{flip} = 1 - (w_\text{stay} + w_\text{prior})$

\STATE \quad $\x_s \sim \Cat(\x_s;\,
w_\text{stay}\x_t + w_\text{prior}\q_1$
\STATE \qquad $\;+\;\, w_\text{flip} f_\theta(\x_t, t))$

\STATE\textbf{end for}
%\STATE \textbf{return} $\x_0$
\end{algorithmic}
\end{algorithm}
\end{minipage}
\end{figure*}

%% file: sections/4_experiments.tex
\section{Experiments}\label{sec:experiments}
In this section, we discuss our empirical evaluation across various benchmarks. Our primary goal is to compare our generative model to previous discrete diffusion approaches and investigate the effect of random flips to the prior distribution. We benchmark our model on two tasks: graph generation in \cref{sec:graph_experiments} and text generation in \cref{sec:language_experiments}. Additional details on the datasets, model architectures, training, and evaluation metrics are provided in \cref{app:exp_details}.

\subsection{Graph Generation}\label{sec:graph_experiments}
\paragraph{Experimental setup.}
For graph generation, we parametrize our neural network with the graph transformer of \citet{vignac2022digress}, using Relative Random Walk Probabilities (RRWP) \citep{ma2023graph,qin2024defog} to encode structural properties. The network predicts and generates node and edge labels (see ~\cref{sec:generative_model}). We provide additional details and hyperparameters in~\cref{app:training_settings}.
\paragraph{Small molecules.}
We evaluate our model on the QM9 dataset~\citep{wu2018moleculenet} (without hydrogen). The results and baselines are reported in \cref{tab:qm9_molecule_gen}.
\input{tables/qm9_results}
\input{tables/large_molecules}

Overall, \ours{} faithfully learns the data distribution, achieving a validity of 99.3\%, matching the training distribution, and achieves state-of-the-art FCD results with $\lambda=0$ and a time distortion scheduler with $\rho=4$, see \cref{sec:time_distortion} for more details.
\paragraph{Large molecules.}
In addition to QM9, we evaluate \ours{} on the two established benchmarks GuacaMol~\citep{brown2019guacamol} and MOSES~\citep{polykovskiy2020molecular}. We follow the standard data splits and provide our experimental results in \cref{tab:large_molecule_gen}.
We observe that our model excels in FCD, SNN, and Filters while maintaining a high validity. Notably, our model with reduced steps (10\% of steps) outperforms DeFoG with reduced steps across most metrics. Also, the full-step model achieves a higher validity than DeFoG. For these results, we used $\rho=4$ for both datasets, and $\lambda=0.4$ for MOSES and $\lambda=0.2$ for GuacaMol.
%\subsection{Protein Sequence Generation}

\subsection{Text Generation}\label{sec:language_experiments}

\paragraph{Experimental setting.} 
We train \ours\ on two datasets: the One Billion Words Dataset (LM1B) \citep{chelba2013one} and OpenWebText (OWT) \citep{Gokaslan2019OpenWeb}. We use the same data splits as \citet{wang2025remasking} and follow their tokenization and model configurations. 
For evaluation, we assess language modeling performance using likelihood-based metrics and report perplexity on the test split. To evaluate the quality of generated sequences, we compute generative perplexity (Gen PPL) using GPT-2 Large, and measure sentence entropy to assess output diversity. 

\paragraph{Likelihood evaluation.}
\input{includes/ppl_vs_T}
We compute the perplexity on LM1B and report results in \cref{tab:lm1b_ppl}. 
Our primary baselines are uniform discrete diffusion models, including D3PM (uniform) \citep{austin2021structured} and SEDD (uniform) \citep{lou2023discrete}. We further include an autoregressive baseline, as well as absorbing-state diffusion models such as D3PM (absorb) and MDLM \citep{sahoo2024simple}. 
Compared to prior diffusion baselines, \ours\ consistently improves performance within the uniform diffusion family, with SEDD (Uniform) \citep{lou2023discrete} achieving the lowest perplexity. As shown in \cref{fig:ppl_vs_T_lm1b}, we analyze perplexity scaling with respect to the number of diffusion steps. Using a fixed resampling parameter $\lambda = 0.2$, \ours\ attains strong performance with as few as 16 steps and outperforms both D3PM Uniform and D3PM Absorb by a substantial margin.
Additional results with different resampling parameters are presented in \cref{fig:ppl_vs_steps}.
\paragraph{Sample quality.} We assess the generation capability of \ours\ when training on OpenWebText. In \cref{tab:gen_ppl_entropy}, we report the generative perplexity and the entropy for generation steps in $\{1024,512,256\}$. We use $\lambda=0.2$ for sampling. \ours\ achieves comparable generative perplexity to SEDD (uniform) \citep{lou2023discrete}, while providing improved diversity. 
\input{tables/lm1b_results}
\input{tables/OWT_results}
%\todo{Elaborate}
\subsection{Time distortion}\label{sec:time_distortion}
\input{includes/ablation_4panel}
Following previous works, we distort time steps during sampling~\citep{karras2022elucidating,qin2024defog}. Instead of uniformly spacing the time steps, we allocate more steps to later stages of generation, i.e., closer to the final sample. More specifically, we use a similar scheduler as~\citep{karras2022elucidating}:
\begin{align*}
    t(i)'=t(i)^\rho,
\end{align*}
with $\rho>1$. We analyze the effect of $\rho$ in \cref{fig:ablation_moses} and report the FCD and validity for MOSES across different settings. Increasing $\rho$ above $1$ yields a substantial improvement while stagnating from $\rho=2$. We provide additional ablations in \cref{app:additional_results}.
\subsection{Effect of $\lambda$}\label{sec:lambda_effect}
Intuitively, increasing $\lambda$ introduces more transitions, i.e., $\x_t\neq\x_s$, during generation as the weights $w_{\mathrm{flip}}$ and $w_{\mathrm{prior}}$ increase. In the extreme case $\lambda=0$, the transition in \eqref{eq:general_posterior} reduces to a purely local update and will have at most one transition to the goal state. In practice, however, our trained neural network is not guaranteed to consistently predict the same class, which can lead to additional transitions.

To quantify this effect, we measure the average number of state transitions over $50$ sampling steps on QM9, for the nodes and edges. We show the results in \cref{fig:flips}.
\input{includes/flips_figure}
We observe that the number of transitions increases linearly with $\lambda$, starting close to 0. Note that the average number of transitions for $\lambda=0$ is well below 1, as we have a highly imbalanced class distribution and our sampling algorithm starts from the marginal distribution. We show its empirical effect on performance in \cref{fig:ablation_moses}.

%% file: tables/qm9_results.tex
\begin{table}[!h]
\centering
\caption{QM9 without hydrogen generation performance. Best results in \textbf{bold}, second best \underline{underlined}. Baseline results are taken from the corresponding papers.}\label{tab:qm9_molecule_gen}
\resizebox{\columnwidth}{!}{%
\begin{tabular}{l*{4}{c}}
\toprule
%\cmidrule(lr){2-6}\cmidrule(lr){7-13}
Model
& Val.\,$\uparrow$ & Relaxed Val. \,$\uparrow$ & Unique \,$\uparrow$ & FCD \,$\downarrow$\\
\midrule
Training set
& 99.3 & 99.5 & 99.2 & 0.03 \\
\midrule
DiGress{\tiny ~\citep{vignac2022digress}} & 99.0 & - & 96.2 & - \\
DisCo{\tiny~\citep{xu2024discrete}} & \underline{99.3} & - & - & -\\
Cometh{\tiny~\citep{siraudin2024cometh}} & \textbf{99.6} & - & \textbf{96.8} & 0.25\\
DeFoG{\tiny~\citep{qin2024defog} (10\%)} & 98.9 & 99.2 & 96.2 & 0.26\\
DeFoG{\tiny~\citep{qin2024defog}} & \underline{99.3} & \underline{99.4} & 96.3 & \underline{0.12} \\
\midrule
\textbf{\ours{} (Ours)} (10\%) & \underline{99.3} & \textbf{99.5} & \underline{96.5} & \textbf{0.10} \\
\textbf{\ours{} (Ours)} & \underline{99.3} & \textbf{99.5} & 96.3 & 0.13 \\
\bottomrule
\end{tabular}
}
\end{table}

%% file: tables/large_molecules.tex
\begin{table*}[t]
\centering
\caption{Large molecule generation performance. Best scores in \textbf{bold}, second best \underline{underlined}. Baseline results are taken from the corresponding papers.}
%\vspace{-0.5em}
\label{tab:large_molecule_gen}
\resizebox{\textwidth}{!}{%
\begin{tabular}{l*{11}{c}}
\toprule
& \multicolumn{4}{c}{GuacaMol} & \multicolumn{7}{c}{MOSES} \\
\cmidrule(lr){2-5}\cmidrule(lr){6-12}
Model
& Val.\,$\uparrow$ & V.U.\,$\uparrow$ & V.U.N.\,$\uparrow$ & FCD\,$\uparrow$
& Val.\,$\uparrow$ & Unique.\,$\uparrow$ & Novelty\,$\uparrow$ & Filters\,$\uparrow$ & FCD\,$\downarrow$ & SNN\,$\uparrow$ & Scaf\,$\uparrow$ \\
\midrule
Training set
& 100.0 & 100.0 & 0.0 & 92.8
& 100.0 & 100.0 & 0.0 & 100.0 & 0.01 & 0.64 & 99.1 \\
\midrule
DiGress{\tiny~\citep{vignac2022digress}} & 85.2 & 85.2 & 85.1 & 68.0
& 85.7 & \textbf{100.0} & 95.0 & 97.1 & \underline{1.19} & 0.52 & 14.8 \\
DisCo{\tiny~\citep{xu2024discrete}}
& 86.6 & 86.6 & 86.5 & 59.7
& 88.3 & \textbf{100.0} & \textbf{97.7} & 95.6 & 1.44 & 0.50 & 15.1 \\
Cometh{\tiny~\citep{siraudin2024cometh}}
& 98.9 & 98.9 & 97.6 & 72.7
& 90.5 & \underline{99.9} & 92.6 & \underline{99.1} & 1.27 & 0.54 & 16.0\\
DeFoG{\tiny~\citep{qin2024defog}}
& \underline{99.0} & \underline{99.0} & \textbf{97.9} & 73.8
& \textbf{92.8} & \underline{99.9} & 92.1 & 98.9 & 1.95 & \underline{0.55} & 14.4 \\
\textbf{\ours{} (Ours)} & \textbf{99.6} & \textbf{99.5} & \underline{97.8} &\textbf{78.3} & \underline{90.7} & \underline{99.9} & 90.2 & \textbf{99.4} & 1.58 & \textbf{0.56} & 13.2	 \\
\midrule
DeFoG{\tiny~\citep{qin2024defog} (10\% steps)}
& 91.7 & 91.7 & 91.2 & 57.9
& 83.9 & \underline{99.9} & \underline{96.9} & 96.5 & 1.87 & 0.50 & \textbf{23.5} \\
\textbf{\ours{} (Ours)} {\tiny(10\% steps)} & 98.1 & 98.0 & 97.5 & \underline{77.6} & 87.1 & \underline{99.9} & 94.3 & 98.9	 & \textbf{1.14} & 0.53 & \underline{18.5} \\
\bottomrule
\end{tabular}
}
\vspace{1em}
\end{table*}

%% file: includes/ppl_vs_T.tex
{\setlength{\columnsep}{6pt}
\begin{wrapfigure}[12]{r}{1.75in}
  \centering
  \vspace{-1\baselineskip} 
  \includegraphics[width=1.75in]{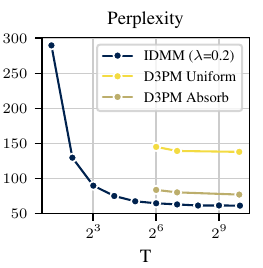}
  \vspace{-1.75em}
  \caption{Perplexity scaling.}
  \label{fig:ppl_vs_T_lm1b}
  %\vspace{-0.8\baselineskip} % tweak to reduce whitespace after
\end{wrapfigure}

%% file: tables/lm1b_results.tex
\begin{table}[htpb]
\centering
\caption{Test perplexities (PPL) on LM1B.
The best value is \underline{underlined}, and the best uniform diffusion model is \textbf{bolded}.
 $^\dagger$, $^{*}$, and $^{\$}$ indicate results taken from \citet{sahoo2025diffusion}, \citet{lou2023discrete}, and \citet{austin2021structured}, respectively.}
\label{tab:lm1b_ppl}
\resizebox{\columnwidth}{!}{%
\begin{tabular}{l
  S[table-format=3.1]
  %S[table-format=3.1, table-space-text-post=-]
  %S[table-format=3.1, table-space-text-post=-]
  }
\toprule
Model 
 & \multicolumn{1}{c}{PPL\,$\downarrow$} \\
%\cmidrule(lr){2-2}
%& Model
%& {\(T=1000\)}
%& {\(T=128\)}
%& {\(T=64\)} 
%\\
\midrule
\textit{Autoregressive} $^\dagger$ \\
\quad Transformer \citep{sahoo2024simple}
& \underline{22.3} 
%& \multicolumn{1}{c}{-} 
%& \multicolumn{1}{c}{-} 
\\
\midrule
\textit{Diffusion (Absorbing State / Gaussian)} $^\dagger$ \\
\quad BERT-Mouth \citep{wang2019bert} 
& 142.9 
%& \multicolumn{1}{c}{-} 
%& \multicolumn{1}{c}{-}
\\
\quad D3PM Absorb$^{\$}$ \citep{austin2021structured} 
& 76.9 
%& 80.1 
%&  83.6 
\\
\quad Diffusion-LM \citep{li2022diffusion} & 118.6
%& \multicolumn{1}{c}{-} 
%& \multicolumn{1}{c}{-} 
\\
\quad DiffusionBert \citep{he2023diffusionbert} 
& 63.8 
%& \multicolumn{1}{c}{-} 
%& \multicolumn{1}{c}{-} 
\\
\quad SEDD Absorb \citep{lou2023discrete}
& 32.7 
%& \multicolumn{1}{c}{-} 
%& \multicolumn{1}{c}{-} 
\\
\quad MDLM \citep{sahoo2024simple}
& \textbf{27.0} 
%& \multicolumn{1}{c}{-} 
%& \multicolumn{1}{c}{-}
\\
\quad Duo \citep{sahoo2025diffusion} & 29.9 
%& \multicolumn{1}{c}{-} 
%& \multicolumn{1}{c}{-}
\\
\quad \textbf{\ours\ (Ours)} & 42.2 \\

\midrule
\textit{Diffusion (Uniform)} \\
\quad D3PM Uniform$^{\$}$ \citep{austin2021structured} 
& 137.9 
%& 139.2 
%& 145.0 
\\
\quad SEDD Uniform$^{*}$ \citep{lou2023discrete} & \textbf{40.3} 
%& \multicolumn{1}{c}{-} 
%& \multicolumn{1}{c}{-} 
\\
\quad \textbf{\ours\ (Ours)} 
& 51.1 
%& \textbf{\underline{62.8}} 
%& \textbf{\underline{64.5}} 
\\
\bottomrule
\end{tabular}
}
\vspace{-1.2em}
\end{table}

%% file: tables/OWT_results.tex
\begin{table*}[htpb]
\caption{Generative perplexity using GPT-2 large and entropy for different generation steps $T$. Best values are \underline{underlined}, and the best uniform diffusion results are \textbf{bolded}.}
\small
\centering
\begin{tabular}{l
S[table-format=3.1, table-space-text-post=-]
S[table-format=3.1, table-space-text-post=-]
S[table-format=3.1, table-space-text-post=-]
S[table-format=1.2, table-space-text-post=-]
S[table-format=1.2, table-space-text-post=-]
S[table-format=1.2, table-space-text-post=-]
}
\toprule 
& \multicolumn{3}{c}{Gen PPL.\,$\downarrow$} 
& \multicolumn{3}{c}{Entropy\,$\uparrow$} \\
\cmidrule{2-7}
Model & {\(T=1024\)} & {\(T=512\)} & {\(T=256\)} 
& {\(T=1024\)} & {\(T=512\)} & {\(T=256\)}  \\

\midrule
Autoregressive  & 12.1 &  \text{-} &  \text{-} & 5.22 &  \text{-} &  \text{-} \\
\midrule 
SEDD Absorb \citep{lou2023discrete} & 105.0 &  104.5 & 109.8 & 5.62 & 5.62 & 5.63\\
MDLM \citep{sahoo2024simple} & 104.9 & 104.4 & 112.7 & \textbf{5.63} & 5.63 & \textbf{5.66}\\
Duo \citep{sahoo2025diffusion} & \textbf{77.7} & \textbf{78.1} &  \textbf{78.6} & 5.55 & 5.55 & 5.55\\
\midrule
SEDD Uniform \citep{lou2023discrete} & \underline{99.9} & \underline{100.4} & 103.4 & 5.56 & 5.56 &  \underline{5.56} \\
\textbf{\ours{} (ours)} & 100.9 & 103.2 &\underline{98.5}  & \underline{5.60} & \underline{\textbf{5.64}}&  5.51\\
\bottomrule
\end{tabular}
\label{tab:gen_ppl_entropy}
\vspace{1.3em}
\end{table*}

%% file: includes/ablation_4panel.tex
\begin{figure*}[t]
    \centering
    \vspace{0.5em}
    \includegraphics[width=\linewidth]{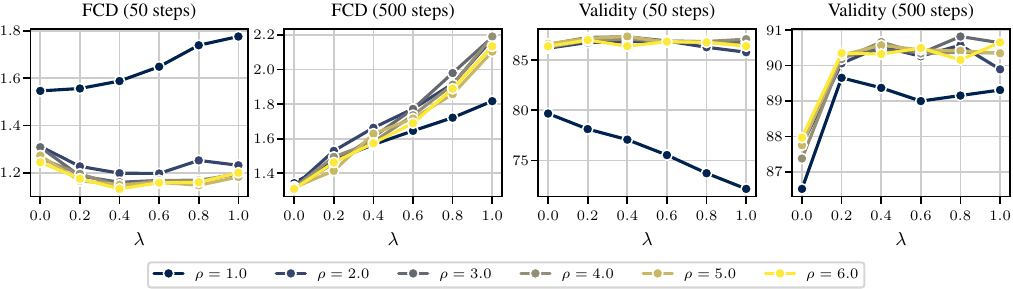}
    \caption{Different metrics on the MOSES dataset across different values of $\rho$ and $\lambda$. For 50 steps, increasing $\rho$ proves to be helpful but yields vanishing improvements after $\rho=2$. Further, the FCD and validity achieve their optimum around 0.4 for 50 steps. Finally, for 500 steps, increasing $\lambda$ worsens the FCD while increasing the validity.}
    \label{fig:ablation_moses}
    \vspace{0.5em}
\end{figure*}

%% file: includes/flips_figure.tex
\begin{figure}[h]
    \centering
    %\vspace{0.0em}
    \includegraphics[width=\columnwidth]{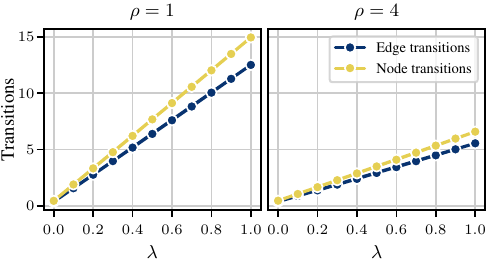}
%\vspace{-1.5em}
    \caption{Number of transitions across different $\lambda$. The number of transitions rises linearly when increasing $\lambda$ for nodes and edges on the QM9 dataset.}
    \label{fig:flips}
    \vspace{-1.7em}
\end{figure}

%% file: sections/5_related_work.tex
\section{Related Work}
\paragraph{Discrete Diffusion models.}
Diffusion models were first introduced for continuous data ~\citep{sohl2015deep}, and have demonstrated strong generative performance across a range of domains, including images~\citep{nichol2021improved,ho2020denoising}, audio~\citep{kong2020diffwave}, and video~\citep{ho2022video}. 
The extension of diffusion models to discrete data was first explored by~\citet{sohl2015deep}, which adopted a binomial diffusion process for binary data. Building on this,~\citet{hoogeboom2021argmax} generalized the framework to categorical random variables using transition matrices with uniform corruption probabilities.~\citet{austin2021structured} propose more structured categorical corruption processes for certain data types, such as absorbing-state diffusion for sequences, and discretized Gaussian kernels for ordinal data. 

Recent works~\citep{sahoo2024simple} examine a special case of the corruption process in~\citet{austin2021structured}, reformulating it as an interpolation between clean data and a prior distribution. In particular, \citet{sahoo2024simple} focus on masked diffusion models, and derive a simplified expression of the ELBO, showing that it corresponds to a weighted average of cross-entropy losses. \citet{schiff2024simple} extend this to uniform noise models and provide a continuous-time NELBO leading to performance gains. \citet{sahoo2025diffusion} propose a hybrid model that combines a discrete diffusion process with a Gaussian diffusion process defined directly over one-hot token representations rather than continuous embeddings. Alternative perspectives on discrete diffusion leverage continuous-time Markov chains (CTMC)~\citep{campbell2022continuous,campbell2024generative} or concrete score matching~\citep{lou2023discrete}.

\paragraph{Graph generation.}
Graph generation approaches can be categorized into two classes: autoregressive approaches, which generate nodes and edges sequentially, and one-shot approaches, which rely on diffusion-based methods.
Diffusion models leverage the order-agnostic nature of denoising processes to maintain permutation equivariance, and many state-of-the-art approaches further incorporate marginal distributions over node and edge attributes as priors~\citep{vignac2022digress,qin2024defog}.
Recent advances in graph generation have demonstrated that both discrete diffusion models, implemented in either discrete-time settings~\citep {vignac2022digress} or continuous-time formulations~\citep{xu2024discrete}, as well as discrete flow-matching approaches~\citep{qin2024defog}, achieve strong performance in different domains, including molecular and synthetic graph generation. \citet{boget2025simple} propose a diffusion model for graphs by assuming that the intermediate noisy states only depend
on the clean data, thereby removing the dependence on partially denoised states. This is closely related to \ours{}'s extreme case where the resampling coefficient is equal to 1.

\paragraph{Text generation.}
For language modeling, autoregressive transformers~\citep{radford2019language,vaswani2017attention} have achieved impressive results. However, their sequential token sampling leads to slow inference and limited controllability. To address this, various works explore discrete diffusion models for language generation. In particular, absorbing-state diffusion models \citep{shi2024simplified,sahoo2024simple} have demonstrated strong empirical results; however, they suffer from a key limitation: once a token is unmasked, it cannot be revised, which can cause error accumulation and degrade sample quality. Several predictor-corrector samplers have been proposed \citep{campbell2022continuous,gat2024discrete,wang2025remasking} to mitigate this issue, but at the cost of increased computation. In contrast, uniform diffusion models \citep{austin2021structured,schiff2024simple} naturally exhibit a self-correcting property absent in autoregressive and masked diffusion approaches. However, they do not achieve state-of-the-art performance. We note that our \ours{} approach naturally enables resampling from arbitrary priors, thereby allowing refinement of tokens for both absorbing-state and uniform diffusion. 

\section{Discussion} 
\paragraph{Limitations.}
We evaluate our model on graph and text generation. However, our proposed framework applies directly to any other discrete domain, including protein generation~\citep{campbell2024generative} and DNA sequences~\citep{stark2024dirichlet}. Further, our experiments are limited to uniform and marginal priors. Our generative model, however, readily extends to masking-based priors, thereby enabling iterative token refinement.

\paragraph{Future work.} Beyond extending to different domains and prior distributions, future work could investigate the use of guidance and other conditioning techniques. Finally, it would be promising to explore alternative training techniques based on variance reduction or distillation ~\citep{sahoo2025diffusion}.
%\todo{Sirine read please}

%\begin{itemize}
%    \item Unlike masked and uniform diffusion models, where the reverse posterior is fully determined by a fixed Markovian forward transition kernel, we only constrain the marginal distribution $p(\x_t \mid \x)$.
%    \item 
%\end{itemize}

%\textcolor{red}{TODO:}
%\begin{itemize}
%    \item While our generative model can be applied with arbitrary prior distributions, we mainly focus on uniform and marginal priors in our experiments in \cref{sec:experiments}, and show that we can outperform different state-of-the-art uniform diffusion models.
%\end{itemize}

%% file: sections/6_conclusion.tex
\section{Conclusion}
In this work, we introduced a discrete diffusion model whose reverse process is expressed by three actions: \emph{(i)} staying in the current state, \emph{(ii)} transitioning to the goal state, and \emph{(iii)} resampling from the prior distribution. By defining a marginal distribution, we derive a family of posteriors that interpolates between a local and a state-independent update, enabling us to directly control the transition frequency, while only training a single model. Further, we show that our reverse process cannot be recovered from a standard Markov noising process, as it corresponds to a non-Markovian forward process. Finally, we demonstrate that \ours{} achieves competitive performance on graph and language generation tasks.

\section*{Broader Impact}
In this work, we propose a generative model for discrete data and apply it to molecule and text generation. As discrete generative models can be applied in various fields, we see potential positive and negative societal impacts. 
Generative modeling can accelerate scientific discovery and enhance natural language generation capabilities. However, there are potential societal consequences, specifically those related to the generation of harmful text or designing unsafe drugs.
%However, we do not see any immediate concerns.

%% file: sections/7_appendix.tex
\section{Proof of \texorpdfstring{\cref{thm:marginal_consistent}}{Theorem~\ref{thm:marginal_consistent}}}\label{sec:proof}

\begin{theorem}\label{thm:marginal_consistent_app}
Let \(\gamma:[0,1]\to[0,1]\) satisfy \(\gamma_0=1\) and \(\gamma_1=0\), and assume \(\gamma\) is monotone decreasing.
For any \(0<t\leq1\) and \(0\leq s <t\), define
\begin{align}
\gamma_{s\mid t} := \frac{1-\gamma_s}{1-\gamma_t}\in[0,1].
\end{align}
Then for every \(\lambda\in[0,1]\), the weights
\begin{align}
w_{\mathrm{stay}} &:= (1-\lambda)\,\gamma_{s\mid t}\label{eq:w_stay_app} , \\
w_{\mathrm{prior}}&:= \lambda\,(1-\gamma_s),\\
w_{\mathrm{flip}} &:= (1-\lambda)\,(1-\gamma_{s\mid t})+\lambda\,\gamma_s \label{eq:w_flip_app}
\end{align}
satisfy the marginal constraints \eqref{eq:weight_constraints}.
Therefore, the reverse transition can be written equivalently as
\begin{equation}
\begin{aligned}
p_\lambda(\x_s\mid \x_t,\x)
&=\Cat\!\Big(\x_s; (1-\lambda)\gamma_{s\mid t}\,\x_t
+\lambda(1-\gamma_s) \q_1 \\
&+\Big(1-\big((1-\lambda)\gamma_{s\mid t}+\lambda(1-\gamma_s)\big)\Big)\e_x\Big)
\end{aligned}
\end{equation}

and its marginal satisfies
\begin{align}
\sum_{\x_t} p_\lambda(\x_s\mid \x_t,\x)\,p(\x_t\mid \x)
=
p(\x_s\mid \x),
\qquad\text{for all }s<t,
\end{align}
where \(p(\x_t\mid \x)=\Cat(\x_t; (1-\gamma_t)\q_1+\gamma_t\e_\x)\).
\end{theorem}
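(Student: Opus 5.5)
Since every object involved is an affine combination of $\x_t$, $\q_1$ and $\e_\x$, the proof reduces to coefficient bookkeeping. I would proceed in three steps.

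\textbf{Step 1: well-posedness.} Check that $\gamma_{s\mid t}\in[0,1]$ and that the three weights form a valid probability vector. Since $\gamma$ is monotone decreasing and $s<t$, we have $\gamma_s\ge\gamma_t$, hence $0\le 1-\gamma_s\le 1-\gamma_t$. Thus $\gamma_{s\mid t}\in[0,1]$ whenever $\gamma_t<1$.

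The degenerate case $\gamma_t=1$ is the one point needing care. It forces $\gamma_s=1$, and I would treat it by the convention $\gamma_{s\mid t}:=1$; in that case both sides of every identity reduce to $\e_\x$.

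Nonnegativity of the weights is then immediate from $\lambda\in[0,1]$ and $\gamma_s,\gamma_{s\mid t}\in[0,1]$. For the sum, expand
\[
(1-\lambda)\gamma_{s\mid t}+\lambda(1-\gamma_s)+(1-\lambda)(1-\gamma_{s\mid t})+\lambda\gamma_s=(1-\lambda)+\lambda=1 .
\]
This also shows that $w_{\mathrm{flip}}=1-(w_{\mathrm{stay}}+w_{\mathrm{prior}})$, which gives the equivalent form of $p_\lambda$ in the statement.

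\textbf{Step 2: the constraints \eqref{eq:weight_constraints}.} Use the key identity $\gamma_{s\mid t}(1-\gamma_t)=1-\gamma_s$.
\begin{itemize}
\item Constraint (1): $w_{\mathrm{stay}}(1-\gamma_t)+w_{\mathrm{prior}}=(1-\lambda)(1-\gamma_s)+\lambda(1-\gamma_s)=1-\gamma_s$.
\item Constraint (2): it follows from (1) together with the sum-to-one property, since $w_{\mathrm{stay}}\gamma_t+w_{\mathrm{flip}}=1-\bigl[w_{\mathrm{stay}}(1-\gamma_t)+w_{\mathrm{prior}}\bigr]=\gamma_s$.
\end{itemize}

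\textbf{Step 3: marginal consistency.} Write the mixture in one-hot form,
\[
\sum_{\x_t} p_\lambda(\x_s\mid\x_t,\x)\,p(\x_t\mid\x)=\Cat\Bigl(\x_s;\ \mathbb{E}_{p(\x_t\mid\x)}\bigl[w_{\mathrm{stay}}\x_t+w_{\mathrm{prior}}\q_1+w_{\mathrm{flip}}\e_\x\bigr]\Bigr).
\]
This is valid because the weights do not depend on $\x_t$ and the categorical parameter is linear in $\x_t$.

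Substitute $\mathbb{E}[\x_t]=(1-\gamma_t)\q_1+\gamma_t\e_\x$. The coefficient of $\q_1$ becomes $w_{\mathrm{stay}}(1-\gamma_t)+w_{\mathrm{prior}}$, and the coefficient of $\e_\x$ becomes $w_{\mathrm{stay}}\gamma_t+w_{\mathrm{flip}}$. By Step 2 these equal $1-\gamma_s$ and $\gamma_s$, which is exactly $p(\x_s\mid\x)$.

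I expect no real obstacle. The only subtle points are the division by $1-\gamma_t$ in the edge case $\gamma_t=1$, handled in Step 1, and justifying that the mixture of categoricals equals the categorical of the averaged parameter, which is a one-line linearity argument.
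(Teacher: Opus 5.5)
Your proposal is correct and follows essentially the paper's own argument: the paper also uses linearity of the mixture to replace $\x_t$ by $\m_t=(1-\gamma_t)\q_1+\gamma_t\e_\x$, then matches coefficients using $\gamma_{s\mid t}(1-\gamma_t)=1-\gamma_s$, wrapped in an induction framing that this one-step identity does not actually need. Your version is slightly more complete, since it explicitly verifies constraints (1)--(2), the nonnegativity and normalization of the weights, and the degenerate case $\gamma_t=1$, none of which the paper checks.
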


\begin{proof}
We provide a proof by induction.

As a starting point, we have $p(\x_{t_1}\mid \x)=\Cat((1-\gamma_{t_1})\q_1+\gamma_{t_1}\e_\x)$ by definition.
For the inductive hypothesis, we assume that for some $t \in [0,1]$, the marginal satisfies $p(\x_t\mid \x)=\Cat((1-\gamma_t)\q_1+\gamma_t\e_\x)$.
The goal is to show that $p(\x_s\mid \x)=\Cat((1-\gamma_s)\q_1+\gamma_s\e_\x)$, for $s < t$.

Let $\m_t = (1-\gamma_t)\q_1+\gamma_t\e_\x$.
By marginalizing out $\x_t$ and using the linearity of the expectation, we have:
\begin{align}
p(\x_s\mid \x)
&=\sum_{\x_t} p_\lambda(\x_s\mid \x_t,\x)\,p(\x_t\mid \x)\notag\\
&=\sum_{\x_t} \Cat(\x_s; w_\text{stay}\x_t + w_\text{prior}\q_1 + w_\text{flip}\e_x)\,\Cat(\x_s; \m_t)\notag\\
&=\Cat\Big(\x_s; \mathbb{E}_{\x_t \sim p(\cdot \mid \x)} [w_\text{stay}\x_t + w_\text{prior}\q_1 + w_\text{flip}\e_x] \Big)\notag\\
&=\Cat\Big(\x_s; w_\text{stay}\m_t + w_\text{prior}\q_1 + w_\text{flip}\e_x \Big) \label{eq:weighted_mt}
\end{align}

Now we simplify the expressions using the coefficients in \eqref{eq:w_stay_app} and \eqref{eq:w_flip_app}:
\begin{align}
w_{\text{stay}} \m_t
&= (1-\lambda)\,\gamma_{s\mid t}\bigl((1-\gamma_t)\q_1+\gamma_t\e_x\bigr) \notag\\
&= (1-\lambda)\frac{1-\gamma_s}{1-\gamma_t}(1-\gamma_t)\q_1
   + (1-\lambda)\gamma_{s\mid t}\gamma_t\e_x \notag\\
&= (1-\lambda)(1-\gamma_s)\q_1
   + (1-\lambda)\gamma_{s\mid t}\gamma_t\e_x \label{eq:w_stay_mt}
\intertext{\vspace{0.5ex}}
w_{\text{flip}}\e_x
&= \bigl((1-\lambda)(1-\gamma_{s\mid t})+\lambda\gamma_s\bigr)\e_x \notag\\
&= (1-\lambda)(1-\gamma_{s\mid t})\e_x
   + \lambda\gamma_s\e_x \label{eq:w_flip_ex}
\end{align}

Substituting \eqref{eq:w_stay_mt} and \eqref{eq:w_flip_ex} into \eqref{eq:weighted_mt} yields:
\begin{align}
p(\x_s\mid \x)
&=\Cat\Big(\x_s; w_\text{stay}\m_t + w_\text{prior}\q_1 + w_\text{flip}\e_x \Big) \notag \\
&=\Cat\Big(\x_s;(1-\lambda)(1-\gamma_s)\q_1
   + (1-\lambda)\gamma_{s\mid t}\gamma_t\e_x + \lambda(\1-\gamma_s) \q_1 + (1-\lambda)(1-\gamma_{s\mid t})\e_x + \lambda\gamma_s\e_x\Big)  \notag \\
   &=\Cat\Big(\x_s; (1-\gamma_s) \q_1 + \big( (1-\lambda)(1-\gamma_{s \mid t}(1-\gamma_t)) + \lambda \gamma_s\big) \e_\x\Big) \notag \\
   &=\Cat\Big(\x_s;(1-\gamma_s) \q_1 + \big( (1-\lambda)(1-\frac{1-\gamma_s}{1-\gamma_t}(1-\gamma_t)) + \lambda \gamma_s\big) \e_\x\Big) \notag \\
   &=\Cat\Big(\x_s;(1-\gamma_s) \q_1 + \big( (1-\lambda)\gamma_s + \lambda \gamma_s\big) \e_\x \Big) \notag \\
   &=\Cat\Big(\x_s;(1-\gamma_s) \q_1 + \gamma_s \e_\x\Big) \notag
\end{align}
This completes the induction.
\end{proof}

\section{Derivation of the forward transition \texorpdfstring{$p_{\lambda}(\mathbf{x}_t \mid \mathbf{x}_s, \mathbf{x})$}{p\_lambda(x\_t | x\_s, x)}}%
\label{app:forward_transition}
In the following, we derive the forward transition step $p_{\lambda}(\x_t \mid \x_s, \x)$. 
Recall our reverse process defined for $\lambda \in [0,1]$:
\begin{align}
    p_\lambda(\x_s\mid \x_t,\x) 
    &=\Cat\!\Big(\x_s; (1-\lambda)\gamma_{s\mid t}\,\x_t +\lambda(1-\gamma_s) \q_1 \\ 
    &+\Big(1-\big((1-\lambda)\gamma_{s\mid t}+\lambda(1-\gamma_s)\big)\Big)\e_x\Big) \\
    &=\Cat\!\Big(\x_s; w_\text{stay}\x_t + w_\text{prior}\q_1 + w_\text{flip}\e_\x\Big) 
\end{align}
Let $\x_t$ and $\e_\x$ be single tokens. We write the categorical parameters in scalar form. 
The reverse kernel implies that, for any $i,j \in \{1, \dots K\}$, 
\[
p_\lambda(\x_s = i \mid \x_t = j, \e_\x) = w_\text{stay} \mathbf{1}[i=j] + w_\text{prior} \q_1(i) + w_\text{flip} \mathbf{1}[i = \x].
\]
Let $\m_t(j) = p(\x_t = j \mid \x)$, where $\m_t(\cdot) = (1 - \gamma_t) \q_1(\cdot) + \gamma_t \mathbf{1}[. = \x]$.

We first derive the unnormalized forward probabilities:
We fix $\x_s = i$. Then, for each $j$, we have:
\begin{align}
p_{\lambda}(\x_t = j \mid \x_s = i, \x) &\propto  p_{\lambda}(\x_s = i \mid \x_t = j, \x) p(\x_t = j \mid \x) \\
&\propto \big( w_\text{stay} \mathbf{1}[i=j] + w_\text{prior} \q_1(i) + w_\text{flip} \mathbf{1}[i = \x] \big) \m_t(j) \\
&\propto  \big( w_\text{stay} \mathbf{1}[i=j] + D_i \big) \m_t(j), \label{eq:unnormalized_forward_step}
\end{align}
where $D_i = w_\text{prior} \q_1(i) + w_\text{flip} \mathbf{1}[i = \x]$.

%If $j \neq i$, then $p_{\lambda}(\x_t = j \mid \x_s = i, \x) = D_i \m_t(j)$. \\
%If $j = i$, then $p_{\lambda}(\x_t = i \mid \x_s = i, \x) = (w_\text{stay} + D_i)\m_t(i)$.
%\begin{align}
%Z_i &= \sum_{j'} p_{\lambda}(\x_s = i \mid \x_t = j', \x) p(\x_t = j' \mid \x)\\
%&= \sum_{j'} (w_\text{stay} \mathbf{1}[i = j'] + D_i) \m_t(j')\\
%&= w_\text{stay}\m_t(i) + D_i \sum_{j'} \m_t(j') \\
%&= w_\text{stay}\m_t(i) + D_i\;,
%\end{align}
%since $ \sum_{j'}\m_t(j') = 1$. 
The normalizing constant is $p(\x_s = i \mid \x) = \m_s(i)$. We can also derive it as follows:
\begin{align}
p(\x_s =i \mid \x) &= \sum_{j'} p_{\lambda}(\x_s = i \mid \x_t = j', \x) p(\x_t = j' \mid \x)\\
&= \sum_{j'} (w_\text{stay} \mathbf{1}[i = j'] + D_i) \m_t(j')\\
&= w_\text{stay}\m_t(i) + D_i \sum_{j'} \m_t(j') \\
&= w_\text{stay}\m_t(i) + D_i\;,
\end{align}
since $ \sum_{j'}\m_t(j') = 1$. The normalized forward transition of \eqref{eq:unnormalized_forward_step} is:
\begin{equation}\label{eq:forward_bayes_expression}
    p_{\lambda}(\x_t = j \mid \x_s = i, \x) = 
    \frac{\big(w_\text{stay}\mathbf{1}[i=j] + D_i\big)\m_t(j)}{\m_s(i)}.
\end{equation}

If $i = j$, then:
\[p_{\lambda}(\x_t = i \mid \x_s = i, \x) = \frac{(w_\text{stay} + D_i)\m_t(i)}{\m_s(i)}.
\]
For $i \neq j$:
\[
p_{\lambda}(\x_t = j \mid \x_s = i, \x) = \frac{D_i \m_t(j)}{\m_s(i)}.
\]

To further simplify \eqref{eq:forward_bayes_expression}, we define the coefficient:
\begin{equation}\label{eq:alpha}
  \alpha_i = \frac{D_i}{\m_s(i)} = 
  \frac{D_i}{w_\text{stay}\m_t(i) + D_i} \in [0,1]\;.
\end{equation}
Then, we can rewrite the distribution over $\x_t$ as a mixture:
\begin{equation}
    p_{\lambda}(\x_t = j \mid \x_s = i, \x) = \alpha_i \m_t(j) + (1 - \alpha_i) \mathbf{1}[j = i] \;.
\end{equation}
The final forward transition can be simplified to:
\begin{equation}\label{eq:forward_transition_lambda}
    p_{\lambda}(\x_t \mid \x_s, \x) = \Cat(\x_t; \alpha_{\x_s} \m_t + (1 - \alpha_{\x_s})\e_{\x_s})\;,
\end{equation}
which means that in the forward transition, $\x_t = \x_s$ with probability $(1-\alpha_{\x_s})$, otherwise we sample from the marginal $\x_t \sim \m_t = p(\x_t \mid \x)$. 
The underlying forward process of \ours\ is therefore non-Markovian as it requires access to the clean state $\x$.

\paragraph{Instantiating the forward transitions for the special cases.}
\paragraph{For $\lambda = 0$:} The coefficients in the reverse process are simplified to:
\[
w_\text{stay}=\gamma_{s\mid t}, \quad w_\text{prior}=0,  \quad \text{and } w_\text{flip} = 1 - \gamma_{s\mid t}.
\]
Therefore, $D_i = (1 - \gamma_{s\mid t}) \mathbf{1}[i = x]$.
The forward transition is:
\begin{itemize}
    \item \textbf{Case 1:} if $\x_s \neq \x$ i.e., $(i \neq x)$, then $D_i = 0$ and the mixing coefficients $\alpha_i = 0$ according to \eqref{eq:alpha}. If we plug in these terms to \eqref{eq:forward_transition_lambda}, we get \\
    \[p_0(\x_t \mid \x_s, \x) = \Cat(\x_t; \e_\x)\;.
    \]
    \item \textbf{Case 2:} if $\x_s = \x$, then $D_i = (1 - \gamma_{s\mid t})$, and the coefficient $\alpha$ in \eqref{eq:forward_transition_lambda} simplifies to \[
    \alpha = \frac{1 - \gamma_{s\mid t}}{(1 - \gamma_{s\mid t} + \gamma_{s \mid t}\m_t)}\;.
    \]
\end{itemize}

\paragraph{For $\lambda = 1$:} The coefficients of the generative process are
\[w_\text{stay}=0, \quad w_\text{prior}=(1 - \gamma_s), \quad \text{and } w_\text{flip} = \gamma_s\]. 
Therefore, the Bayes expression in \eqref{eq:forward_bayes_expression} collapses to:
\[
p_1(\x_t = j \mid \x_s = i, \x) = \frac{D_i \m_t(j)}{D_i} = \m_t(j)\;.
\]
This implies that the forward transition is \textbf{conditionally independent of $\x_s$ given $\x$} and corresponds directly to the marginal
\[
p_1(\x_t \mid \x_s , \x) = p(\x_t \mid \x) = \Cat\Big((1 - \gamma_t)\q_1 + \gamma_t \e_\x \Big).
\]

%\subsection{Training Signal for different $\lambda$}
%We can write $p_\lambda$ as:
%\begin{align}
%p_\lambda(\x_s\mid \x_t,\x)
%&=
%\lambda
%\underbrace{\Cat\Big(\x_s;\gamma_{s\mid t}\x_t + (1-\gamma_{s\mid t})\e_\x\Big)}_{=:~p_{\mathrm{loc}}(\x_s\mid \x_t,\x)}
%+(1-\lambda)\,
%\underbrace{\Cat\Big(\x_s;(1-\gamma_s)\q_1 + \gamma_s\e_\x\Big)}_{=:~p_{\mathrm{marg}}(\x_s\mid \x)}.
%\end{align}
%Now, we look at the gradients of the KL divergence $\text{KL}(p_\lambda(\x_s\mid \x_t,\x)\mid p_\lambda^\theta(\x_s\mid \x_t))$:
%\begin{align}
%    &\nabla_\theta\text{KL}(p_\lambda(\x_s\mid \x_t,\x)\mid p_\lambda^\theta(\x_s\mid \x_t)) \\
%    &= - \nabla_\theta\mathbb{E}_{p_\lambda(\x_s\mid \x_t,\x)}[\log p_\lambda^\theta(\x_s\mid \x_t)]\\ 
%    &= - \lambda\nabla_\theta\mathbb{E}_{p_{\mathrm{loc}}(\x_s\mid \x_t,\x)}[\log p_\lambda^\theta(\x_s\mid \x_t)] - (1-\lambda)\nabla_\theta\mathbb{E}_{p_{\mathrm{marg}}(\x_s\mid \x)}[\log p_\lambda^\theta(\x_s\mid \x_t)] \\
%    &= - \lambda\nabla_\theta\mathbb{E}_{p_{\mathrm{loc}}(\x_s\mid \x_t,\x)}[\log p_\lambda^\theta(\x_s\mid \x_t)] - (1-\lambda)\nabla_\theta\mathbb{E}_{p_{\mathrm{marg}}(\x_s\mid \x)}[\log p_\lambda^\theta(\x_s\mid \x_t)] \\
%    &= - \lambda\gamma_{s\mid t}\nabla_\theta\log p_\lambda^\theta(\x_t\mid \x_t) - \lambda(1-\gamma_{s\mid t})\nabla_\theta\log p_\lambda^\theta(\e_x\mid \x_t) \\ &- (1-\lambda)(1-\gamma_s)\nabla_\theta\mathbb{E}_{\q_1}[\log p_\lambda^\theta(\x_s\mid \x_t)] - (1-\lambda)\gamma_s\nabla_\theta\log p_\lambda^\theta(\e_x\mid \x_t)
%\end{align}

\section{Number of expected transitions}\label{sec:expectation_of_transitions}
We can analytically derive the expected number of transitions, i.e., number of transitions where $\x_s\neq\x_t$.
First, we recall the update probability:
\begin{align}
    p_\lambda(\x_s=j\mid\x_t=i,\x)=(1-\lambda)\gamma_{s\mid t}\mathbf{1}[i = j]+\lambda(1-\gamma_s)\q_1(j)+w_\mathrm{flip}\mathbf{1}[j = \x].
\end{align}
A transition occurs when $j\neq i$. We define the indicator variable $C=\mathbf{1}[\x_t\neq\x_s]$. Now we want to derive the probability of a transition:
\begin{align}
    p_\lambda(C=1\mid\x_t=i,\x)=1-p_\lambda(\x_s=i\mid\x_t=i,\x).
\end{align}
Next, we condition the probability $p_\lambda(\x_s=j\mid\x_t=i,\x)$ on two events:
\paragraph{Case 1: $\x_t\neq \x$} In this case, the last term drops out out and the transition simplifies further:
\begin{align}
    p_\lambda(\x_s=\x_t\mid\x_t\neq \x)&=(1-\lambda)\gamma_{s\mid t}+\lambda(1-\gamma_s)\q_1(\x_t), \\
\end{align}
and therefore:
\begin{align}
    p_\lambda(C=1\mid\x_t\neq \x)&=1-(1-\lambda)\gamma_{s\mid t}-\lambda(1-\gamma_s)\q_1(\x_t), \\
    &=1-\gamma_{s\mid t}+\lambda\gamma_{s\mid t}-\lambda(1-\gamma_s)\q_1(\x_t) \\
    &= A + \lambda\left(\frac{1-\gamma_{s}}{1-\gamma_{t}}-(1-\gamma_s)\q_1(\x_t)\right) \\
    &= \underbrace{A}_{\geq 0} + \lambda\underbrace{\left((1-\gamma_s)\left(\frac{1}{1-\gamma_{t}}-\q_1(\x_t)\right)\right)}_{\geq 0} 
\end{align}
\paragraph{Case 2: $\x_t=\x$} Here, we plug in the value of $w_\mathrm{flip}$:
\begin{align}
    p_\lambda(\x_s=\x_t\mid\x_t=\x)&=(1-\lambda)\gamma_{s\mid t}+\lambda(1-\gamma_s)\q_1(\x)+(1-\lambda)\,(1-\gamma_{s\mid t})+\lambda\,\gamma_s
\end{align}
And therefore:
\begin{align}
p_\lambda(C=1\mid\x_t=\x)&= 1 -(1-\lambda)\gamma_{s\mid t}-\lambda(1-\gamma_s)\q_1(\x)-(1-\lambda)\,(1-\gamma_{s\mid t})-\lambda\,\gamma_s \\
    &= 1 -(1-\lambda)-\lambda(1-\gamma_s)\q_1(\x)-\lambda\,\gamma_s \\
    &= \lambda-\lambda(1-\gamma_s)\q_1(\x)-\lambda\,\gamma_s \\
    &= \lambda\left(1-(1-\gamma_s)\q_1(\x)-\gamma_s\right) \\
    &= \lambda\underbrace{\left((1-\gamma_s)(1-\q_1(\x)\right)}_{\geq 0} \\
\end{align}
Finally, we write down the number of total expected transitions:
\begin{align}
    \mathbb{E}\left[\sum_{t=1}^T C_t\right]&=\sum_{t=1}^T\mathbb{E}\left[C_t\right] \\
    &=\sum_{t=1}^T\mathbb{E}\left[C_t\mid \x_t=\x \right]p(\x_t=\x) + \mathbb{E}\left[C_t\mid \x_t\neq\x \right]p(\x_t\neq\x) \\
    &=\sum_{i=1}^Tp_\lambda(C_t=1\mid\x_t=\x)p(\x_t=\x) + p_\lambda(C_t=1\mid\x_t\neq \x)p(\x_t\neq\x),
\end{align}
which is a mixture of two functions that are linear with respect to $\lambda$. As the mixture parameters are independent of $\lambda$, the total expected value is linear.

\clearpage
\section{Experimental Details}\label{app:exp_details}
For all experiments we use a linear scheduler for $\gamma_t$, i.e., $\gamma_t=1-t$. In the following, we describe dataset specific details.
\subsection{Datasets}\label{app:datasets} 
\input{tables/app_datasets}
\subsubsection{Text Datasets}
We summarize the details of the text dataset in \cref{tab:datasets}.
\paragraph{LM1B.} 
We download the dataset from \href{https://huggingface.co/datasets/billion-word-benchmark/lm1b}{Hugging Face Datasets (LM1B)}, and use the bert-base-uncased tokenizer. We pad the sequences to a maximum length of 128. We use the official training, validation, and test splits provided in the dataset.

\paragraph{OpenWebText.} We tokenize using the GPT-2 tokenizer and wrap the sequences to a maximum length of 1024. We adopt the same train-validation splits from \citet{sahoo2024simple}, where the last 100,000 samples are used for validation.
\subsubsection{Molecule datasets}
\paragraph{QM9.}
The QM9 dataset~\citep{wu2018moleculenet} consists of molecules with up to 9 heavy atoms. We use the same split established by \citep{vignac2022digress}, and 10000 graphs for evaluation. 
\paragraph{GuacaMol.}
Furthermore, we evaluate our model on the GuacaMol dataset~\citep{brown2019guacamol}. The molecules range from 2 to 88 heavy atoms.
\paragraph{MOSES.}
Finally, we benchmark our model on the MOSES dataset~\citep{polykovskiy2020molecular}, spanning filtered molecules from 8 to 27 heavy atoms.
\subsection{Model architectures}\label{app:model_architectures}
\input{tables/app_architectures}
\paragraph{Text datasets.} For LM1B and OWT training, we use the modified Transformer \citep{peebles2023scalable} architecture from \citet{sahoo2024simple}. The model consists of 12 layers, a hidden dimension of 768, and 12 attention heads.
\paragraph{Graph datasets.}
For the graph datasets, we use the graph transformer proposed by~\citep{vignac2022digress} with the hyperparameters used by~\citep{qin2024defog} and Relative Random Walk Probabilities~\citep{ma2023graph} as node and edge features. We show our hyperparameters in \cref{tab:graph_hyperparameters}.
\input{tables/graph_hyperparameters}
\subsection{Training settings}\label{app:training_settings}
\input{tables/app_training_settings}
In \cref{tab:graph_hyperparameters} and \cref{tab:training_settings}, we report the training hyperparameters used in our experiments for the different datasets.

\subsection{Evaluation metrics}\label{app:eval_metrics}

\subsubsection{Text Generation}
\paragraph{Perplexity.} Perplexity is computed as the exponential of the negative average token-level log-likelihood over the evaluation data. We report the perplexities on the test splits following \citet{sahoo2024simple}.
\paragraph{Generative perplexity.} We report generative perplexity for unconditionally generated sequences, following \citep{sahoo2025diffusion}. We measure it using GPT-2 large \citep{radford2019language}. We further report sample entropy to assess diversity.

\subsubsection{Graph Generation}
We evaluate the graph generation with standard molecule metrics. We report the validity, uniqueness, i.e., proportion of generated molecules with different SMILES string, and novelty, i.e., proportion of generated molecules not in the training set. Furthermore, we report the Frechet ChemNetDistance (FCD)~\citep{preuer2018frechet}, which measures the embedding similarity of the generated molecules with respect to the training set. Finally, we report the Scaffold similarity and SNN.

\clearpage
\section{Additional Results}\label{app:additional_results}
In the following, we show hyperparameter ablations for $\rho$ and $\lambda$ across the datasets QM9, Moses, GuacaMol, and LM1B in ~\cref{fig:qm9_heatmap,fig:moses_heatmap,fig:guacamol_heatmap,fig:ppl_vs_steps}.
\input{includes/heatmap_qm9}
\input{includes/heatmaps_moses}
\input{includes/heatmap_guacamol}
\input{includes/ppl_vs_steps}

%% file: tables/app_datasets.tex
\begin{table}[h]
\centering
\caption{Additional details on the datasets.}
\label{tab:datasets}
\begin{tabular}{lccc}
\hline
Dataset & Tokenizer & Vocab. Size & Sequence Length \\
\hline
OWT & GPT2 & 50,257 & 1024  \\
LM1B & bert-base-uncased & 30,522 & 128 \\
\hline
\end{tabular}
\end{table}

%% file: tables/app_architectures.tex
\begin{table}[h]
\centering
\caption{Model architectures on various datasets.}
\label{tab:architectures}
\begin{tabular}{lcc}
\hline
Dataset & Architecture & Parameter Count \\
\hline
LM1B & Transformer & 139 M \\
OWT & Transformer & 169 M \\
\hline
\end{tabular}
\end{table}

%% file: tables/graph_hyperparameters.tex
\begin{table}[h]
\centering
\caption{Overview of the training hyperparameters for the different experiments.}
\label{tab:graph_hyperparameters}
\begin{tabular}{lccc}
\hline
 & QM9 & Guacamol & Moses \\
\hline
Epochs        & 1000 & 250 &  300\\
%Batch size         & 512 & 64\\
Learning rate      & $2\times10^{-4}$ & $2\times10^{-4}$ & $2\times10^{-4}$ \\
Optimizer          & ADAM $(0.9, 0.999)$ & ADAM $(0.9, 0.999)$ & ADAM $(0.9, 0.999)$ \\
RRWP steps   & 12 & 12 & 20 \\
Number of layers & 9 & 10 & 12 \\
Number of heads & 8 & 8 & 8 \\
$\rho$ & 4 & 4 & 4 \\
$\lambda$ & 0.0 & 0.2 & 0.4\\
\hline
\end{tabular}
\end{table}

%% file: tables/app_training_settings.tex
\begin{table}[h]
\centering
\caption{Overview of the training hyperparameters for the different experiments.}
\label{tab:training_settings}
\begin{tabular}{lcc}
\hline
 & LM1B & OWT \\
\hline
Train steps        & 1000K & 150K\\
Context size       & 128 & 1024 \\
Batch size         & 512 & 64\\
Learning rate      & $1\times10^{-3}$ & $3\times10^{-4}$ \\
Optimizer          & ADAM $(0.9, 0.999)$ & ADAM $(0.9, 0.999)$ \\
LR warmup steps    & 2.5K & 2.5K \\
\hline
\end{tabular}
\end{table}

%% file: includes/heatmap_qm9.tex
\begin{figure*}[h]
\centering
\begin{subfigure}{0.5\textwidth}
  \centering
  \includegraphics[width=\linewidth]{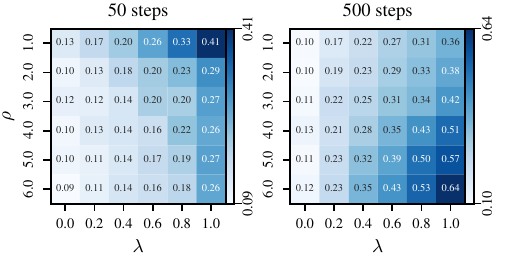}
    \vspace{-1em}
  \caption{FCD $\downarrow$}
\end{subfigure}%
\begin{subfigure}{0.5\textwidth}
  \centering
  \includegraphics[width=\linewidth]{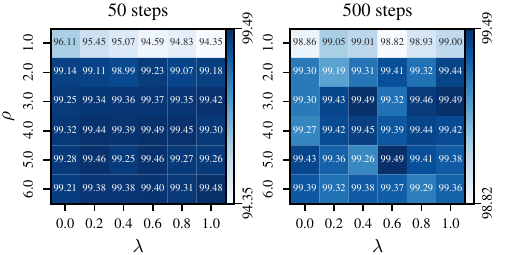}
    \vspace{-1em}
  \caption{Validity $\uparrow$}
\end{subfigure}
\vspace{-1em}
\caption{FCD and validity for QM9 across different parameters.}
\label{fig:qm9_heatmap}
\end{figure*}

%% file: includes/heatmaps_moses.tex
\begin{figure*}[h]
\centering
\begin{subfigure}{0.5\textwidth}
  \centering
  \includegraphics[width=\linewidth]{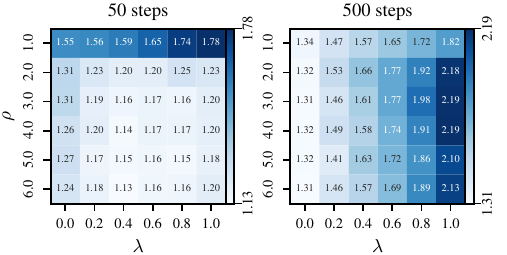}
  \vspace{-1em}
  \caption{FCD $\downarrow$}
\end{subfigure}%
\begin{subfigure}{0.5\textwidth}
  \centering
  \includegraphics[width=\linewidth]{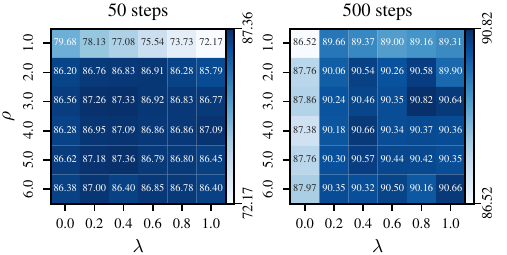}
  \vspace{-1em}
  \caption{Validity $\uparrow$}
\end{subfigure}
\vspace{0.75em}
\begin{subfigure}{0.5\textwidth}
  \centering
  \includegraphics[width=\linewidth]{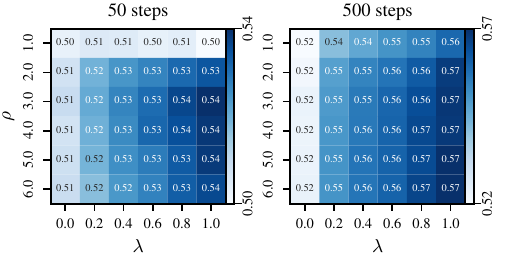}
    \vspace{-1em}
  \caption{SNN $\uparrow$}
\end{subfigure}%
\begin{subfigure}{0.5\textwidth}
  \centering
  \includegraphics[width=\linewidth]{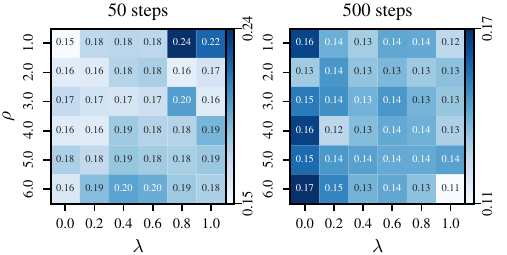}
    \vspace{-1em}
  \caption{Scaf $\uparrow$}
\end{subfigure}
  \vspace{-1em}
\caption{FCD, validity, SNN, and Scaffold similarity for MOSES across different parameters.}
\label{fig:moses_heatmap}
\end{figure*}

%% file: includes/heatmap_guacamol.tex
\begin{figure*}[h]
\centering
\begin{subfigure}{0.5\textwidth}
  \centering
  \includegraphics[width=\linewidth]{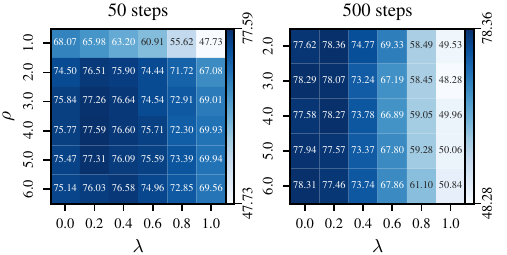}
    \vspace{-1em}
  \caption{FCD $\downarrow$}
\end{subfigure}%
\begin{subfigure}{0.5\textwidth}
  \centering
  \includegraphics[width=\linewidth]{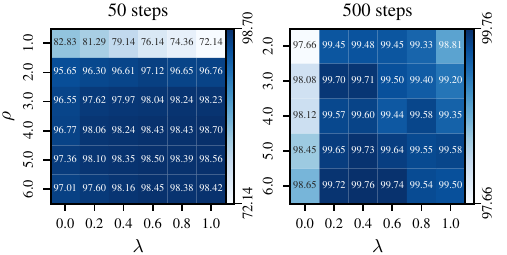}
    \vspace{-1em}
  \caption{Validity $\uparrow$}
\end{subfigure}
  \vspace{-1em}
\caption{FCD and validity for Guacamol across different parameters.}
\label{fig:guacamol_heatmap}
\end{figure*}

%% file: includes/ppl_vs_steps.tex
\begin{figure}[htpb]
    \centering
    \begin{subfigure}[t]{0.48\linewidth}
        \centering
        \includegraphics[width=\linewidth]{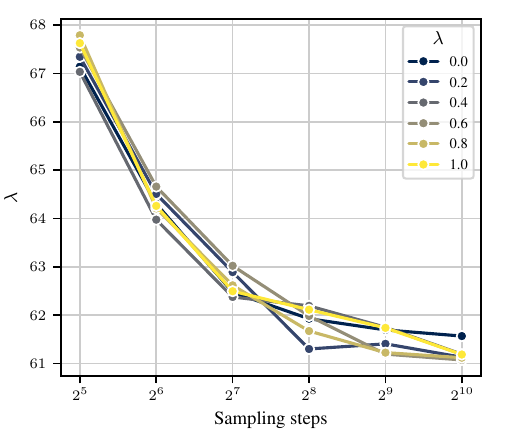}
        \caption{Perplexity vs. sampling steps}
        \label{fig:ppl_vs_steps_line}
    \end{subfigure}
    \hfill
    \begin{subfigure}[t]{0.48\linewidth}
        \centering
        \includegraphics[width=\linewidth]{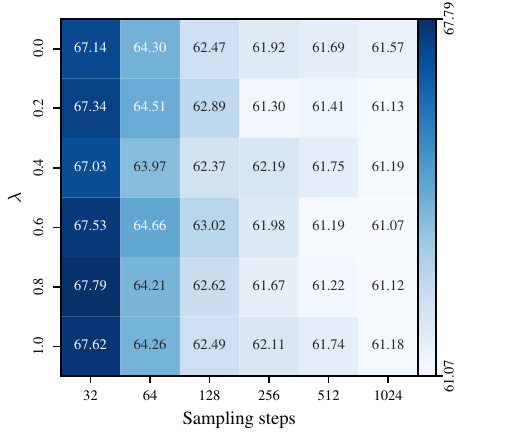}
        \caption{Heatmap}
        \label{fig:ppl_vs_steps_heatmap}
    \end{subfigure}

    \caption{Perplexity on LM1B as a function of the number of sampling steps for different mixing scheduler values $\lambda$.}
    \label{fig:ppl_vs_steps}
\end{figure}